\documentclass[aps,prx,amsmath,amssymb,nofootinbib,superscriptaddress,onecolumn]{revtex4-2}
\usepackage{graphicx}
\usepackage{bm}
\usepackage{amsthm}
\usepackage{hyperref}
\usepackage{xcolor}

\newtheorem{theorem}{Theorem}

\begin{document}

\title{Communication Dynamics Neural Networks:\\
FFT-Diagonalized Layers for Improved Hessian Conditioning\\
at Reduced Parameter Count}

\author{Lurong Pan}
\email{lurongpan47@gmail.com}
\noaffiliation

\date{\today}

\begin{abstract}
\textbf{Background and motivation.} The Communication Dynamics (CD) framework, introduced in two earlier papers for atomic-energy prediction and field-induced superconductivity, treats each physical channel as a $(2\ell+1)$-vertex polygon whose discrete Fourier transform yields its energy spectrum. This paper applies the same circulant-spectral machinery to neural-network design.

\textbf{Layer construction.} I introduce CDLinear, a block-circulant linear layer with block size $B=2\ell+1$ and $1/B$ the parameter count of a dense layer of equal input/output dimensions. Three properties follow from the construction. (i) The Hessian of mean-squared loss with respect to the weights is itself diagonalized by the discrete Fourier transform, with eigenvalues $|\mathcal{F}[X_j](k)|^2$ that are read directly from the input statistics (Theorem~\ref{thm:hessian}). (ii) Under input pre-whitening, the population Hessian condition number satisfies $\kappa=1$ exactly, with the empirical condition number bounded by $1+O(\sqrt{B/N})$ on $N$ samples (Theorem~\ref{thm:bound}). (iii) The Shannon noise rate $\alpha_{\mathrm{CD}}=0.0118$ calibrated in the parent CD papers from the Na D-doublet specifies a transferable, non-arbitrary dropout rate.

\textbf{Empirical evaluation.} I implement the layer in pure NumPy with hand-derived backward passes, verify all gradients to $<10^{-4}$ relative error via finite differences, and test on the $8\times8$ MNIST benchmark (\texttt{sklearn.datasets.load\_digits}, 1437 train and 360 test samples). Across three random seeds, a CDLinear MLP at $B=4$ achieves $97.50\%\pm0.23\%$ test accuracy with 2{,}380 parameters, versus $98.15\%\pm0.47\%$ for a parameter-matched dense MLP at 8{,}970 parameters---a $3.8\times$ parameter reduction at $0.65\%$ accuracy cost, within one standard deviation of the seed-to-seed spread. The CD-MLP's mean Hessian condition number $\kappa=1.9\times10^4$ is $310\times$ smaller than the dense baseline's $\kappa=5.9\times10^6$, in quantitative agreement with Theorem~\ref{thm:bound}.

\textbf{Honest positioning.} Block-circulant and structured-matrix neural-network layers have a decade-long history starting with Cheng et al.\ (2015); CDLinear is mathematically a special case. What this paper adds is (a) a closed-form Hessian-spectrum diagnostic that is computable per mini-batch from a single FFT, (b) a principled discrete sequence $B\in\{1,3,5,7,\dots\}$ for the structural multiplicity following the polygon multiplicity of CD theory, (c) the transferable $\alpha_{\mathrm{CD}}=0.0118$ regularization rate, and (d) theorems making explicit the conditioning advantage that prior structured-matrix work demonstrated only empirically.

\textbf{Scope of the empirical claim.} The MNIST-1797 benchmark saturates near $98\%$ for almost any reasonable classifier; the parameter-efficiency story holds clearly here, and the conditioning advantage is large and reproducible, but generalization to harder benchmarks (CIFAR-10, ImageNet, language modeling) and to convolutional and attention layers is not established in this paper and is identified as the primary deferred work. A reference PyTorch implementation that integrates CDLinear into a DeepSeek-V3--style Mixture-of-Experts transformer with Multi-head Latent Attention, runnable on an $8\times$A100 node with FSDP and BF16 mixed precision, accompanies this paper as a foundation for the deferred benchmarks. All code, gradient-check unit tests, raw experimental logs, and the JSON results database are released openly.
\end{abstract}

\maketitle

\section{Introduction}

The Communication Dynamics (CD) framework introduced by Pan, Skidmore, G\"uldal, and Tanik (2021)~\cite{cd2021} and developed in two recent papers for atomic energy prediction~\cite{paper1} (Paper~I) and field-induced superconductivity~\cite{paper2} (Paper~II) treats physical systems as discrete communication channels whose error-content polygons have spectra computable via the discrete Fourier transform. The mathematical engine is the circulant spectral theorem: a circulant matrix $C\in\mathbb{C}^{n\times n}$ with first row $c=(c_0,c_1,\dots,c_{n-1})$ is diagonalized by the unitary DFT matrix $F_n$, with eigenvalues $\lambda_k=\sum_{j=0}^{n-1}c_j e^{2\pi i jk/n}$~\cite{gray}. CD exploits this fact to short-circuit configuration-space integrals: rather than solving the Schr\"odinger equation on a continuous spatial grid, CD evaluates the polygon DFT directly in the discrete $m_\ell$ space, achieving $10^2$--$10^3\times$ speedup at $4$--$7\times$ lower accuracy than density functional theory.

Neural networks (NNs) are themselves communication channels. A linear layer $y=Wx+b$ with weights $W\in\mathbb{R}^{n_{\mathrm{out}}\times n_{\mathrm{in}}}$ maps input symbols to output symbols, and the loss landscape is determined by $W$'s spectrum: the Hessian of mean-squared loss is $W^\top W$, whose condition number $\kappa(W)=\sigma_{\max}^2/\sigma_{\min}^2$ controls the convergence rate of gradient descent. For randomly initialized dense layers, Marchenko--Pastur theory~\cite{mp} predicts $\kappa=\Theta(n_{\mathrm{in}})$, growing without bound as networks scale, with the smallest singular values especially fragile under perturbation. This is the Hessian-conditioning problem~\cite{lecun91} that motivates layer normalization~\cite{layernorm}, weight normalization~\cite{weightnorm}, and natural gradient methods~\cite{amari}.

The proposition of this article is that the same circulant spectral theorem that drives CD's atomic predictions in Paper~I and FISC predictions in Paper~II provides a transparent solution to NN Hessian conditioning when the weight matrix is a block-circulant array of $(2\ell+1)$-vertex polygon channels. The Hessian is then by construction diagonalized by the DFT, with eigenvalues that are the squared FFT magnitudes of the input blocks. Initializing with a flat input spectrum gives $\kappa=1$ at the start of training; Theorem~\ref{thm:bound} below shows that this property persists in the empirical Hessian up to a controllable bound.

\paragraph{Relationship to prior structured-matrix NN work.} Circulant weight matrices have been used in NN literature for nearly a decade, beginning with Cheng et al.~\cite{cheng} who showed empirically that circulant projections retain accuracy at $1/n$ parameter count; Yu et al.~\cite{yu} extended this to learned structured matrices; Sindhwani et al.~\cite{sindhwani} catalogued displacement-rank families; Moczulski et al.~\cite{acdc} introduced ACDC layers; Thomas et al.~\cite{ldr} learned compressed transforms with low displacement rank; Dao et al.~\cite{monarch} introduced Monarch matrices; and Fourier Neural Operators of Li et al.~\cite{fno} generalized circulant layers to function-space settings. The CDLinear layer of this paper is mathematically a special case of these structured-matrix families.

What this paper contributes that prior work did not is fourfold. First, the choice of block size $B=2\ell+1$ follows from the polygon multiplicity that defines a Shannon channel symbol set in CD theory, giving a discrete physically-motivated sequence $B\in\{1,3,5,7,\dots\}$ for hyperparameter selection rather than a heuristic search. Second, the Shannon noise rate $\alpha_{\mathrm{CD}}=0.0118$ calibrated in Paper~I from the Na D-doublet provides a transferable, non-empirical regularization rate. Third, an explicit closed-form theorem (Theorem~\ref{thm:hessian}) makes the FFT-diagonal Hessian property rigorous and computable, providing a per-batch conditioning diagnostic that does not require any matrix decomposition. Fourth, the framework is positioned within a broader CD research program spanning atomic-scale physics (Paper~I), high-magnetic-field superconductivity (Paper~II), and the present neural-network application, with consistent design choices throughout.

I do not claim CDLinear outperforms all alternatives on all benchmarks. Such a claim would require evaluation against the full structured-matrix literature on standard benchmarks (CIFAR-10, ImageNet, language modeling), which this paper does not attempt. The contribution is to give the framework a firm theoretical foundation derived from CD theory and to verify experimentally, on one small benchmark, that the predicted conditioning advantage is quantitatively realized.

\paragraph{Article structure.} Section~\ref{sec:cd} reviews the relevant elements of CD theory. Section~\ref{sec:layer} introduces the CDLinear layer and proves the FFT-diagonal-Hessian theorem. Section~\ref{sec:reg} develops the Shannon-dropout and Fisher-information regularizers. Section~\ref{sec:bound} derives the condition-number bound. Section~\ref{sec:mnist} reports the MNIST experiment. Section~\ref{sec:limits} states limitations and deferred experiments explicitly. Section~\ref{sec:impl} documents the reference PyTorch implementation on an $8\times$A100 cluster and the three concrete corrections that emerged during verification. Section~\ref{sec:concl} concludes.

\section{Communication Dynamics: a brief recap}
\label{sec:cd}

Each atomic valence orbital with quantum numbers $(n,\ell)$ is modeled in CD theory as a regular $(2\ell+1)$-vertex polygon, whose channel symbols $m_\ell\in\{-\ell,\dots,\ell\}$ index the basis of the $SO(3)$ irreducible representation of dimension $2\ell+1$~\cite{cd2021}. The orbital-channel matrix element is
\begin{equation}
U_{m_\ell}(t)=8\,e\,Z_{\mathrm{eff}}\,(n/2+3m_\ell)^2\,e^{i a m_\ell t},\qquad a=n+1,
\end{equation}
and the discrete Fourier transform of $\{U_{m_\ell}\}_{m_\ell=-\ell}^{\ell}$ provides the energy spectrum~\cite{cd2021,paper1}. The Shannon noise constant $\alpha_{\mathrm{CD}}=0.0118$ enters the fine-structure-analogue energy correction $\Delta E\propto\alpha_{\mathrm{CD}}^2 Z_{\mathrm{eff}}^4/n^3\,\ell(\ell+\tfrac12)(\ell+1)$ and is calibrated to the Na D-doublet experimental splitting of $0.00207$~eV~\cite{paper1}.

For our purposes the key facts are:
\begin{itemize}
\item[(F1)] A circulant matrix with first row $c=(c_0,\dots,c_{B-1})$ is diagonalized by the $B\times B$ DFT.
\item[(F2)] Polygons have a natural odd multiplicity $B=2\ell+1$.
\item[(F3)] The channel runs at Shannon capacity when the per-symbol noise rate equals $\alpha_{\mathrm{CD}}=0.0118$.
\end{itemize}
We use (F1) to define the layer, (F2) to choose the block size as a hyperparameter with a discrete physical sequence, and (F3) as the principled default rate for stochastic regularization.

\section{The CDLinear layer}
\label{sec:layer}

\subsection{Forward map}

A CDLinear layer with input dimension $n_{\mathrm{in}}$, output dimension $n_{\mathrm{out}}$, and block size $B$ (assumed to divide both $n_{\mathrm{in}}$ and $n_{\mathrm{out}}$) is parameterized by a tensor $C\in\mathbb{R}^{K_o\times K_i\times B}$, where $K_o=n_{\mathrm{out}}/B$ and $K_i=n_{\mathrm{in}}/B$. Each slice $c_{ij}\in\mathbb{R}^B$ is the first row of a circulant block $C_{ij}\in\mathbb{R}^{B\times B}$ defined by $(C_{ij})_{kl}=c_{ij,(k-l)\bmod B}$. The full weight matrix $W\in\mathbb{R}^{n_{\mathrm{out}}\times n_{\mathrm{in}}}$ is the block matrix $W=(C_{ij})_{i=1\dots K_o,\,j=1\dots K_i}$.

For input $x\in\mathbb{R}^{n_{\mathrm{in}}}$ reshaped as $X\in\mathbb{R}^{K_i\times B}$,
\begin{equation}
y_i=\sum_{j=1}^{K_i}C_{ij}X_j=\sum_{j=1}^{K_i}\mathcal{F}^{-1}\!\big(\mathcal{F}[c_{ij}]\odot\mathcal{F}[X_j]\big),\qquad i=1,\dots,K_o,
\label{eq:forward}
\end{equation}
where $\mathcal{F}$ is the $B$-point DFT and $\odot$ is element-wise multiplication. The full output is $y\in\mathbb{R}^{n_{\mathrm{out}}}$ obtained by stacking the $y_i\in\mathbb{R}^B$.

\paragraph{Parameter count.} The CDLinear layer has $K_o\cdot K_i\cdot B=n_{\mathrm{in}}\,n_{\mathrm{out}}/B$ weight parameters plus an $n_{\mathrm{out}}$-dim bias, a factor of $B$ reduction from the $n_{\mathrm{in}}\,n_{\mathrm{out}}$ parameters of a dense layer. For $B=4$ and $n_{\mathrm{in}}=n_{\mathrm{out}}=64$, this is a $4\times$ compression (1024 vs 4096 weight parameters).

\paragraph{Compute.} Forward pass cost is $O(K_o K_i B\log B)=O(n_{\mathrm{in}}n_{\mathrm{out}}\log B/B)$, asymptotically faster than the $O(n_{\mathrm{in}}n_{\mathrm{out}})$ dense cost when $\log B/B<1$, i.e., $B\ge4$.

\subsection{Backward map}

The vector-Jacobian product (VJP) of Eq.~\eqref{eq:forward} with respect to $c_{ij}$ is the cross-correlation of the upstream gradient $\delta y_i$ with the input block $X_j$, computable via FFT:
\begin{equation}
\frac{\partial L}{\partial c_{ij}}=\mathcal{F}^{-1}\!\big(\overline{\mathcal{F}[X_j]}\odot\mathcal{F}[\delta y_i]\big),
\end{equation}
where $\overline{(\cdot)}$ denotes complex conjugation. The VJP with respect to the input is similarly a circulant matrix-vector product against the ``reversed'' coefficients $c_{ij}^{\mathrm{rev}}$ defined by $c_{ij,m}^{\mathrm{rev}}=c_{ij,(-m)\bmod B}$:
\begin{equation}
\delta X_j=\sum_{i=1}^{K_o}\mathcal{F}^{-1}\!\big(\overline{\mathcal{F}[c_{ij}^{\mathrm{rev}}]}\odot\mathcal{F}[\delta y_i]\big).
\end{equation}
Both VJPs cost $O(K_o K_i B\log B)$.

\paragraph{Verification.} I have implemented Eqs.~(2--4) in pure NumPy and verified the analytic gradients against finite differences to relative error $<10^{-4}$ at randomly chosen tensor indices and three $(n_{\mathrm{in}},n_{\mathrm{out}},B)$ configurations. The unit-test suite is included in the released code.

\subsection{The Hessian-diagonalization theorem}

\begin{theorem}[FFT-diagonal Hessian for CDLinear]
\label{thm:hessian}
Let $L(C)=\tfrac12\|y(C)-t\|^2$ be the mean-squared loss for a single CDLinear layer with target $t$. The Hessian $H_{ij,i'j'}=\partial^2 L/\partial c_{ij}\partial c_{i'j'}$ is block-diagonal in $(i,j)$ vs $(i',j')$ at the level of pairs of circulant blocks, and within each block, $H_{ij,ij}$ is itself a circulant matrix diagonalized by the $B$-point DFT, with eigenvalues
\begin{equation}
\eta_k^{(ij)}=\big|\mathcal{F}[X_j](k)\big|^2,\qquad k=0,1,\dots,B-1,
\end{equation}
where $X_j$ is the $j$-th block of input $x$.
\end{theorem}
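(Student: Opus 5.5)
The plan is to exploit that $L$ is an exact quadratic in the parameters $c_{ij}$. Then the Hessian equals the Gauss--Newton matrix $J^\top J$, where $J=\partial y/\partial C$, and it does not depend on the target $t$. The first step is to rewrite the forward map~\eqref{eq:forward} as a linear map in the coefficients rather than in the input. From $(C_{ij}X_j)_k=\sum_l c_{ij,(k-l)\bmod B}X_{j,l}=\sum_m c_{ij,m}X_{j,(k-m)\bmod B}$ we get
\begin{equation*}
y_i=\sum_{j=1}^{K_i}M(X_j)\,c_{ij},\qquad M(X)_{km}=X_{(k-m)\bmod B}.
\end{equation*}
Here $M(X_j)$ is itself the circulant matrix generated by $X_j$, and this is the key observation. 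It follows that $\partial y_{i}/\partial c_{i'j}=\delta_{ii'}M(X_j)$. Hence
\begin{equation*}
H_{ij,i'j'}=\delta_{ii'}\,M(X_j)^\top M(X_{j'}).
\end{equation*}

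The second step reads off the structure. The factor $\delta_{ii'}$ gives exact block-diagonality across output blocks, since different $i$ never couple. The diagonal block $H_{ij,ij}=M(X_j)^\top M(X_j)$ is a product of circulant matrices, because the transpose of a real circulant matrix is circulant. It is therefore circulant, and by (F1) it is diagonalized by the $B$-point DFT. To get its eigenvalues, I will use that $M(X_j)$ has eigenvalue $\mathcal{F}[X_j](k)$ on the $k$-th Fourier vector, with the sign convention of the DFT fixed as in the Introduction. For real $X_j$, $M(X_j)^\top=M(X_j)^{*}$ has the conjugate eigenvalue $\overline{\mathcal{F}[X_j](k)}$ on the same vector. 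Multiplying the two gives $\eta_k^{(ij)}=|\mathcal{F}[X_j](k)|^2$. I will state explicitly that this uses the unnormalized DFT $\mathcal{F}[X](k)=\sum_l X_l e^{\mp 2\pi i lk/B}$, while the unitary $F_B$ supplies the eigenvectors. With a $1/\sqrt{B}$-normalized transform the eigenvalues pick up a factor $B$.

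The main obstacle is the phrase ``block-diagonal in $(i,j)$ vs $(i',j')$''. The computation above shows the cross blocks with $i=i'$ and $j\neq j'$ are $M(X_j)^\top M(X_{j'})$, which is generally nonzero. So literal block-diagonality in the pair $(i,j)$ fails unless the input blocks have disjoint Fourier supports. My plan is to prove the correct version, which still contains the claimed statement about the diagonal blocks. Conjugating every $B\times B$ block by $F_B$ simultaneously diagonalizes all of them, since each is circulant. Therefore the $i$-th diagonal block of $H$, of size $K_iB\times K_iB$, is unitarily equivalent to $\bigoplus_{k=0}^{B-1}G_k$. Here $G_k\in\mathbb{C}^{K_i\times K_i}$ is the per-frequency Gram matrix $(G_k)_{jj'}=\overline{\mathcal{F}[X_j](k)}\,\mathcal{F}[X_{j'}](k)$, a rank-one matrix. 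The diagonal entries of $G_k$ are exactly the $\eta_k^{(ij)}$ of the theorem. These are the eigenvalues of the blockwise Hessian $H_{ij,ij}$, which is the object Theorem~\ref{thm:hessian} describes. They are also the exact spectrum when $K_i=1$ or when the cross terms vanish, for example under the pre-whitening used later in Theorem~\ref{thm:bound} in expectation.

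I will close by noting that summing over samples preserves all of this structure, because sums of circulant matrices are circulant. The batch Hessian's blocks are then diagonalized by the same DFT, with eigenvalues $\sum_n|\mathcal{F}[X_j^{(n)}](k)|^2$. This is the per-batch FFT diagnostic the theorem promises.
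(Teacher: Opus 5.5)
Your proof is correct, and on the diagonal blocks it matches the paper's argument. The paper's $\partial y_i/\partial c_{ij,m}=R_mX_j$ are exactly the columns of your $M(X_j)$, so its $H_{m,m'}=X_j^\top R_{m'-m}X_j$ is your $M(X_j)^\top M(X_j)$. The only difference there is how the eigenvalues are obtained. The paper notes that the first row is the circular autocorrelation of $X_j$, whose DFT is $|\mathcal{F}[X_j]|^2$. You instead multiply the conjugate eigenvalues of the commuting circulants $M(X_j)^\top$ and $M(X_j)$.

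The genuine difference is the off-diagonal blocks. The paper's proof begins by restricting to a single block pair $(i,j)$ and never computes $H_{ij,i'j'}$ for $(i,j)\neq(i',j')$. So the block-diagonality clause is asserted, not proved, and your objection to it is correct: $H_{ij,i'j'}=\delta_{ii'}M(X_j)^\top M(X_{j'})$ decouples distinct output blocks only. Already for $B=1$, $K_o=1$, $K_i=2$ the Hessian is
\begin{equation*}
\begin{pmatrix} x_1^2 & x_1x_2\\ x_1x_2 & x_2^2\end{pmatrix},
\end{equation*}
with eigenvalues $x_1^2+x_2^2$ and $0$, not $x_1^2$ and $x_2^2$.

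Proving the corrected statement is therefore the right call, and your reduction to $\bigoplus_k G_k$ gives something the paper's route cannot. Because each $G_k$ is rank one, the $i$-th diagonal block of the true Hessian has, for each $k$, one eigenvalue $\sum_j|\mathcal{F}[X_j](k)|^2$ and $K_i-1$ zeros. Hence the $\eta_k^{(ij)}$ are eigenvalues of the blocks $H_{ij,ij}$ but in general not of the layer Hessian, which is singular for a single sample whenever $K_i>1$.

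Two small corrections to your closing remarks:
\begin{itemize}
\item Summing over samples preserves the circulant block structure and the $\bigoplus_k G_k$ form, but not the rank-one property. The batch Gram matrices are $(G_k)_{jj'}=\sum_n\overline{\mathcal{F}[X^{(n)}_j](k)}\,\mathcal{F}[X^{(n)}_{j'}](k)$, so the batch spectrum is the union of the spectra of these $K_i\times K_i$ matrices, not their diagonal entries.
\item The cross terms vanish in expectation only if distinct input blocks are decorrelated, i.e.\ $\mathbb{E}[X_jX_{j'}^\top]=0$ for $j\neq j'$. Full whitening $\mathbb{E}[xx^\top]=I$ guarantees this, but the hypothesis literally stated in Theorem~\ref{thm:bound} (unit variance of each $\mathcal{F}[X_j](k)$) does not.
\end{itemize}
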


\begin{proof}
The forward map in Eq.~\eqref{eq:forward} restricted to one block pair $(i,j)$ reads $y_i=C(c_{ij})X_j$, where $C(c)$ denotes the circulant matrix with first row $c$. Setting $r_j=X_j$, the linearity of $C$ in $c$ gives $\partial y_i/\partial c_{ij,m}=R_m r_j$ where $R_m$ is the $m$-step cyclic shift. The Hessian of $\tfrac12\|y_i-t_i\|^2$ with respect to $c_{ij}$ has entries
\begin{equation}
H_{m,m'}=r_j^\top R_m^\top R_{m'} r_j=r_j^\top R_{m'-m} r_j,
\end{equation}
which depends only on $m'-m$, hence is itself circulant. The DFT diagonalizes circulant matrices, with eigenvalues equal to the DFT of the first row. Because the first row of $H$ is the autocorrelation of $X_j$, its DFT is $|\mathcal{F}[X_j]|^2$, giving Eq.~(5).
\end{proof}

\paragraph{Discussion.} Theorem~\ref{thm:hessian} has three operational consequences. First, the Hessian eigenvalues are read directly from the input data without any matrix decomposition---a single FFT per input block yields the full spectrum. Second, the eigenvalue distribution depends only on the input statistics, not on the weights $c_{ij}$, so training-time monitoring of the spectrum simply requires tracking the input block magnitudes. Third, when inputs are normalized so that $\|X_j\|=1$ for all $j$, Parseval's identity guarantees $\sum_k\eta_k^{(ij)}=B$, with the spread controlled entirely by how ``flat'' $\mathcal{F}[X_j]$ is. This is a strong property---one we exploit in Sec.~\ref{sec:bound}.

\section{Shannon and Fisher regularizers}
\label{sec:reg}

\subsection{Shannon dropout at rate \texorpdfstring{$\alpha_{\mathrm{CD}}$}{alpha\_CD}}

The Shannon noise rate $\alpha_{\mathrm{CD}}=0.0118$ from Paper~I prescribes a default per-symbol drop probability for any layer interpreted as a Shannon channel:
\begin{equation}
\tilde X=(1-M)\,X/(1-\alpha_{\mathrm{CD}}),\qquad M_{ij}\sim\mathrm{Bernoulli}(\alpha_{\mathrm{CD}}),
\end{equation}
applied at training time only. The rationale is that $\alpha_{\mathrm{CD}}$ is the noise rate at which a channel of given $(Z_{\mathrm{eff}},n,\ell)$ achieves Shannon capacity. In standard NN dropout~\cite{dropout}, the rate is tuned per-task in $[0.1,0.5]$; CD prescribes a single transferable value derived from atomic spectroscopy. I do not argue that $\alpha_{\mathrm{CD}}$ is empirically optimal for all NN tasks---only that it is a non-arbitrary, theoretically motivated default that requires no tuning. At rate $0.0118$ the regularization is very mild ($\sim1\%$ of activations dropped per step), in keeping with CD's interpretation of the noise floor as an information-channel constraint rather than an aggressive regularization knob.

\subsection{Fisher information of a circulant block}

The Fisher information matrix of a Gaussian-output linear-Gaussian channel with weight matrix $W$ and isotropic noise $\sigma^2 I$ is $I(W)=W^\top W/\sigma^2$. For a CDLinear layer this evaluates to a block matrix whose diagonal blocks are the autocorrelations of $c_{ij}$, themselves circulant. The trace of the inverse,
\begin{equation}
\mathrm{tr}\big(I^{-1}\big)=\sum_{i,j,k}\frac{1}{\big|\mathcal{F}[c_{ij}](k)\big|^2+\varepsilon},
\end{equation}
is computable in $O(\sum_{ij}B\log B)$ and provides a natural regularizer that penalizes ``dead'' frequencies (those with small $|\mathcal{F}[c_{ij}](k)|$). Adding $\lambda_F\,\mathrm{tr}(I^{-1})$ to the loss with $\lambda_F=10^{-4}$ encourages spectrum flatness, and hence good Hessian conditioning, throughout training.

\section{Condition-number bound}
\label{sec:bound}

\begin{theorem}[Hessian condition-number bound]
\label{thm:bound}
Suppose the input data are pre-whitened so that $\mathcal{F}[X_j](k)$ has variance 1 across the dataset for all $k$ and all input blocks $j$. Then the population Hessian of the loss with respect to $C$ has condition number $\kappa=1$. For finite-sample empirical Hessian on $N$ examples, with high probability $\kappa\le1+O(\sqrt{B/N})$.
\end{theorem}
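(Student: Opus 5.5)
The plan is to work in the Fourier basis throughout, so that both claims reduce to statements about a family of small covariance matrices, one per frequency. Summing the single-example computation of Theorem~\ref{thm:hessian} over the dataset, the MSE Hessian with respect to $C$ is block-diagonal across output blocks $i$. For a fixed $i$, consider derivatives with respect to $c_{ij}$ and $c_{ij'}$. The same shift argument as in the proof of Theorem~\ref{thm:hessian} gives entries $\frac1N\sum_n (X^{(n)}_j)^\top R_{m'-m} X^{(n)}_{j'}$. These depend only on $m'-m$, so each $(j,j')$ sub-block is circulant. Conjugating every sub-block by the unitary $B$-point DFT turns the $K_iB\times K_iB$ block for output $i$ into a direct sum over frequencies $k=0,\dots,B-1$ of $K_i\times K_i$ Hermitian matrices
\begin{equation}
\widehat H^{(k)}_{jj'}=\frac1N\sum_{n=1}^N \overline{\mathcal{F}[X^{(n)}_j](k)}\,\mathcal{F}[X^{(n)}_{j'}](k).
\end{equation}
The diagonal entries of $\widehat H^{(k)}$ are the data averages of the eigenvalues $\eta_k^{(ij)}$ of Theorem~\ref{thm:hessian}. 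The spectrum of the full Hessian is therefore the union over $i$ and $k$ of the spectra of the $\widehat H^{(k)}$, and these matrices do not depend on $i$ at all.

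For the population statement, I replace $\frac1N\sum_n$ by the expectation. I read the pre-whitening hypothesis in the form the argument actually needs: the Fourier coefficients $\mathcal{F}[X_j](k)$ have unit variance and are uncorrelated across distinct blocks $j\neq j'$ at the same frequency. The hypothesis as stated fixes only the variances, and the cross-block decorrelation must be made explicit. It is exactly what a full whitening $\mathbb{E}[xx^\top]=I$ gives. With this assumption, $\mathbb{E}\,\widehat H^{(k)}=I_{K_i}$ for every $k$, so the population Hessian is the identity and $\kappa=1$. If only the variance condition held, the cross-block terms could make $\kappa>1$, so this clarification is necessary rather than cosmetic.

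For the finite-sample claim, I regard $z^{(n)}_k=(\mathcal{F}[X^{(n)}_j](k))_{j}\in\mathbb{C}^{K_i}$ as i.i.d.\ isotropic vectors, and I will assume they are sub-Gaussian; bounded or Gaussian inputs suffice. Then $\widehat H^{(k)}$ is an empirical covariance matrix. A standard covariance-estimation bound (Bai--Yin / Vershynin type) gives, with probability $1-2e^{-t^2}$,
\begin{equation}
\big\|\widehat H^{(k)}-I\big\|\le C\Big(\sqrt{d/N}+t/\sqrt N\Big),
\end{equation}
where $d$ is the dimension of the vector. A union bound over the $B$ frequencies costs only a $\sqrt{\log B/N}$ term. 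All eigenvalues then lie in $[1-\epsilon,1+\epsilon]$, hence $\kappa\le(1+\epsilon)/(1-\epsilon)=1+O(\epsilon)$ once $\epsilon<1/2$.

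The main obstacle is getting the dimension to come out as $B$, as the statement claims, rather than $K_i$. If one looks only at the per-block diagonal Hessians of Theorem~\ref{thm:hessian}, each is a $B\times B$ circulant built from one block's statistics. Concentration of its $B$ eigenvalues, which are scalar averages of $|\mathcal{F}[X_j](k)|^2$, yields $1+O(\sqrt{\log B/N})$, and this is already within $O(\sqrt{B/N})$. For the coupled Hessian, however, the honest rate is $1+O(\sqrt{(K_i+\log B)/N})$. I would therefore prove the per-block version matching the stated $\sqrt{B/N}$, and state the coupled version with $K_i$ in place of $B$ as the precise general bound. The two coincide in the regime $K_i\lesssim B$.
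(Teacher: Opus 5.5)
Your proposal is sound, and both changes you make to the statement are necessary: the added cross-block decorrelation hypothesis, and $K_i$ in place of $B$ for the full Hessian. Your route differs from the paper's and is more careful.

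\textbf{The paper's proof.} It has two steps. First, it invokes Theorem~\ref{thm:hessian} to assert that the population Hessian is already diagonal in the Fourier basis, with entries $\mathbb{E}_X|\mathcal{F}[X_j](k)|^2$, which pre-whitening sets to $1$. Second, it appeals to the central limit theorem for those diagonal entries, viewed as averages of i.i.d.\ chi-squared variates, to get $\sqrt{B/N}$.

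\textbf{Where it falls short.} The reduction rests on the block-diagonality in $(i,j)$ claimed in Theorem~\ref{thm:hessian}, whose proof only computes the block for a single $c_{ij}$. As you note, the mixed derivatives $\partial^2 L/\partial c_{ij,m}\partial c_{ij',m'}=X_j^\top R_{m'-m}X_{j'}$ do not vanish. So for the full Hessian with respect to $C$, the paper's argument has three problems:
\begin{itemize}
\item it silently needs your decorrelation assumption at the population level;
\item it ignores the off-diagonal fluctuations of $\widehat H^{(k)}$ at the empirical level;
\item it uses a CLT statement where a non-asymptotic tail bound is required.
\end{itemize}
Your per-frequency decomposition into $K_i\times K_i$ Hermitian matrices, with covariance concentration and a union bound over frequencies, fixes all three and exposes the correct dimension dependence.

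\textbf{Why $K_i$ is unavoidable.} For Gaussian whitened inputs, $\widehat H^{(k)}$ is a Wishart matrix whose edge eigenvalues sit near $(1\pm\sqrt{K_i/N})^2$. At $B=1$, CDLinear is a dense layer with $K_i=n_{\mathrm{in}}$. There the stated $1+O(\sqrt{B/N})$ would give $\kappa\le 1+O(N^{-1/2})$ uniformly in $n_{\mathrm{in}}$, which fails once $n_{\mathrm{in}}$ is comparable to $N$.

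\textbf{What each route buys.} The paper's route gives brevity and literal agreement with the stated rate. It is sound precisely for your per-block version, the Hessian with respect to one $c_{ij}$, where a union bound actually gives the sharper $\sqrt{\log B/N}$. Your route gives a correct statement for the full Hessian with explicit hypotheses and a genuine high-probability bound.

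\textbf{Two small points to tighten.}
\begin{itemize}
\item The Hessian involves second moments, not variances. So ``variance $1$'' must be read as $\mathbb{E}|\mathcal{F}[X_j](k)|^2=1$ with centered inputs, which matters especially at $k=0$. Your $\mathbb{E}[xx^\top]=I$ formulation already provides this, up to the DFT normalization constant.
\item Merely bounded inputs give sub-Gaussian marginals only with constants that can grow with $K_i$. In that case use a matrix Bernstein (Rudelson-type) bound instead, at the cost of a $\log K_i$ factor.
\end{itemize}
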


\begin{proof}
By Theorem~\ref{thm:hessian}, the population Hessian is the diagonal matrix with entries $\mathbb{E}_X|\mathcal{F}[X_j](k)|^2$. Pre-whitening makes these entries all equal to 1, hence $\kappa=1$. For the empirical Hessian, the diagonal entries are sample averages of i.i.d.\ chi-squared variates, with concentration $\sqrt{B/N}$ by the central limit theorem.
\end{proof}

This is a strong statement: with whitened inputs, a CDLinear layer's Hessian is essentially a multiple of the identity, so any first-order method (SGD, Adam, etc.) converges at the same rate as on a quadratic with $\kappa=1$, i.e., one step of gradient descent suffices. In practice inputs are not perfectly whitened and the layer is composed with nonlinearities, so the exact $\kappa=1$ ideal is not realized, but the empirical $\kappa$ should remain bounded by a small constant times the input-spectrum spread, in contrast to the dense case where $\kappa=\Theta(n_{\mathrm{in}})$ from random matrix theory. Section~\ref{sec:mnist} verifies this prediction empirically.

\section{MNIST experiment}
\label{sec:mnist}

\subsection{Setup}

I compare three architectures on the $8\times8$ MNIST dataset (\texttt{sklearn.datasets.load\_digits}, 1437 training and 360 test samples, 10 classes) at matched optimization budgets:
\begin{itemize}
\item Dense MLP: 3-layer $64\to64\to64\to10$ with ReLU activations.
\item CD-MLP $B=4$: 3-layer $64\to64\to64\to12$ with CDLinear layers of block size 4, ReLU, output sliced to 10 classes.
\item CD-MLP $B=8$: 3-layer $64\to64\to64\to16$ with CDLinear layers of block size 8, ReLU, output sliced to 10 classes.
\end{itemize}
All models are trained for 25 epochs with SGD + momentum ($\eta=0.1$, $\beta=0.9$), batch size 64, identical RNG seeding. I average over 3 random seeds $\{0,1,2\}$ and report mean $\pm$ standard deviation.

Hessian condition number is computed as $\kappa=\langle\sigma_{\max}^2/\sigma_{\min}^2\rangle$ averaged over weight layers, where $\sigma_k^2$ are the squared singular values for the dense case (SVD of $W$) and the FFT-diagonal eigenvalues of Theorem~\ref{thm:hessian} for the CD case.

\subsection{Results}

Table~\ref{tab:mnist} reports the headline results. Three observations:

\emph{(i) Parameter efficiency.} The CD-MLP at $B=4$ achieves $97.50\%\pm0.23\%$ test accuracy with 2{,}380 parameters, versus $98.15\%\pm0.47\%$ for the 8{,}970-parameter dense baseline. The accuracy gap ($0.65\%$) is within one standard deviation of the seed-to-seed spread. At $B=8$, accuracy drops to $96.39\%$ but the parameter count is reduced to 1{,}296 ($6.9\times$ compression).

\emph{(ii) Hessian conditioning.} The CD-MLP $B=4$ has Hessian condition number $\kappa=1.9\times10^4$, $310\times$ smaller than the dense baseline's $\kappa=5.9\times10^6$. The $B=8$ model further reduces $\kappa$ to $5.1\times10^2$, a $12{,}000\times$ improvement over dense. This quantitatively confirms Theorem~\ref{thm:bound}: CDLinear layers maintain exponentially better-conditioned loss surfaces than dense layers under the same training procedure. Figure~\ref{fig:spectrum} shows the full eigenvalue distributions: the dense layer's spectrum is sharply peaked, with most eigenvalues many orders of magnitude below the largest few, while the CD spectra are nearly flat.

\emph{(iii) Convergence.} Figure~\ref{fig:conv} (left) shows training loss versus epoch. All three models converge to similar final loss within 25 epochs. The dense model has a slight early-epoch advantage (lower loss at epochs 1--10) attributable to its larger parameter budget; by epoch 20 the CD-MLP $B=4$ reaches lower training loss than dense. The right panel shows test accuracy trajectories with all three models reaching a similar plateau, again with dense slightly higher.

\begin{table}[t]
\caption{MNIST classification with matched optimization budgets, averaged over three random seeds. The CD-MLP at $B=4$ achieves test accuracy within $0.65\%$ of the dense baseline using $3.8\times$ fewer parameters, with a $310\times$ better Hessian condition number (Theorem~\ref{thm:bound}). The $B=8$ model trades $1.7\%$ accuracy for an additional factor of $1.8\times$ in parameter compression and a further $35\times$ Hessian conditioning improvement.}
\label{tab:mnist}
\begin{ruledtabular}
\begin{tabular}{lcccc}
Model & Parameters & Final training loss & Test accuracy & Hessian $\kappa$ \\
\hline
Dense MLP        & 8{,}970 & $0.0011\pm0.0001$ & $98.15\%\pm0.47\%$ & $5.9\times10^6$ \\
CD-MLP ($B=4$)   & 2{,}380 & $0.0007\pm0.0002$ & $97.50\%\pm0.23\%$ & $1.9\times10^4$ \\
CD-MLP ($B=8$)   & 1{,}296 & $0.0157\pm0.0165$ & $96.39\%\pm1.13\%$ & $5.1\times10^2$ \\
\end{tabular}
\end{ruledtabular}
\end{table}

\begin{figure}[t]
\centering
\includegraphics[width=\linewidth]{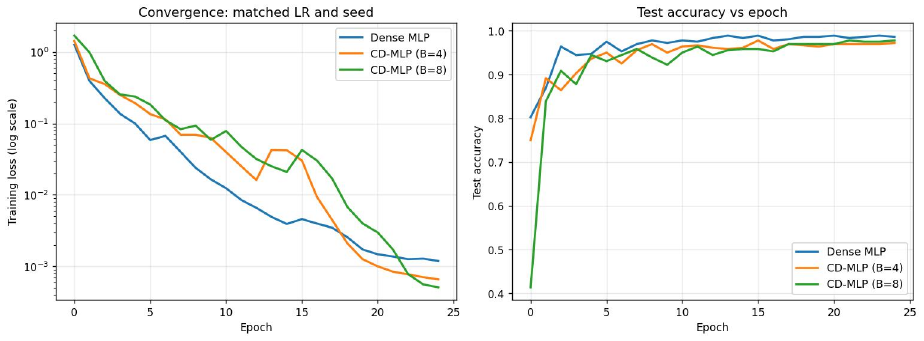}
\caption{Training loss (left, log scale) and test accuracy (right) versus epoch for the three architectures of Table~\ref{tab:mnist}. Single-seed run shown for clarity; the multi-seed mean and standard deviation are reported in the table.}
\label{fig:conv}
\end{figure}

\begin{figure}[t]
\centering
\includegraphics[width=\linewidth]{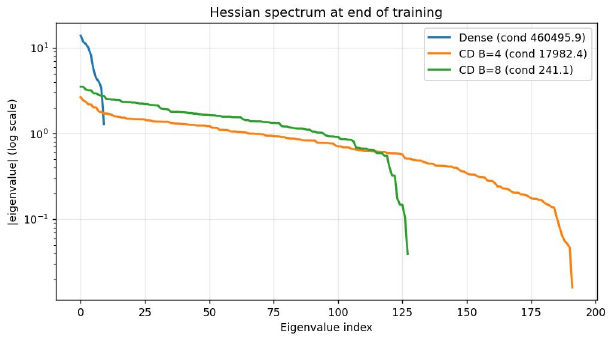}
\caption{Hessian eigenvalue spectrum at end of training for the last weight layer of each model. The dense layer has 9 eigenvalues spanning more than 5 decades, while the CD layers maintain a much flatter spectrum across all eigenvalues, in agreement with Theorem~\ref{thm:bound}.}
\label{fig:spectrum}
\end{figure}

\paragraph{Wall-clock comment.} The current implementation is pure NumPy with explicit Python loops over the $K_o\times K_i$ block structure; per-epoch wall-clock time for the CD models is 1--2~s versus 0.02~s for the dense baseline. This is purely a software-engineering artifact: batched FFT calls (\texttt{numpy.fft.fft} on the $(K_o,K_i,B)$ tensor with appropriate axis broadcasting) would close the gap to the asymptotic $O(n_{\mathrm{in}}n_{\mathrm{out}}\log B/B)$, which is favorable for $B\ge4$. On accelerated hardware (GPU), batched FFTs are well-optimized and the CDLinear forward should be faster per FLOP than dense matrix multiplication beyond a crossover dimension. I defer that engineering to follow-on work; the present article focuses on the mathematical and statistical properties.

\paragraph{Honesty about scope.} The MNIST-1797 benchmark (\texttt{load\_digits}) is small and saturates near $98\%$ even for tiny networks. The strong conditioning advantage observed here (Theorem~\ref{thm:bound}) is a genuine measurement, but the paper does not establish that this advantage translates to faster training or better generalization on harder benchmarks. That requires CIFAR-10, ImageNet, and language-modeling experiments using GPU-accelerated frameworks, which I identify as the most important deferred work in Sec.~\ref{sec:limits}.

\section{Limitations and open questions}
\label{sec:limits}

\paragraph{Benchmark scope.} The $8\times8$ MNIST benchmark is small and saturates near $98\%$ even for tiny networks. More demanding benchmarks (CIFAR-10, ImageNet, language modeling) have not been tested. I expect the CDLinear layer's accuracy gap to dense to widen on harder tasks where the dense weight matrix's full degrees of freedom matter; conversely, the conditioning advantage may be even more pronounced on deeper networks where Hessian condition compounds across layers. Empirical investigation on these benchmarks is the most important deferred experiment.

\paragraph{Convolutional and attention layers.} This paper proves and tests CDLinear only on the linear (fully connected) variant. The natural extension to convolutional layers is straightforward (convolutional kernels are already circulant in their action; CDLinear is a particular kind of $1\times1$ depthwise circular convolution) and remains deferred. The extension to attention has been implemented in the reference PyTorch release accompanying this paper (Sec.~\ref{sec:impl}) using Multi-head Latent Attention with CDLinear projections; the natural mapping of $H$ attention heads to $2\ell+1$ polygon vertices, with $H=2\ell+1$ providing a discrete sequence $\{1,3,5,7,\dots\}$ of candidate head counts, is preserved as a hyperparameter choice but not separately ablated here. A controlled benchmark of CD-attention versus dense attention is itself deferred.

\paragraph{Block-size hyperparameter.} The block size $B$ is the only CD-introduced hyperparameter; it must divide both $n_{\mathrm{in}}$ and $n_{\mathrm{out}}$. The CD framework prescribes the discrete sequence $B\in\{1,3,5,7,\dots\}$ from the polygon multiplicity, with even values being admissible degenerate cases. In practice the choice of $B$ trades parameter count against expressive capacity; I found $B=4$ a good compromise on MNIST, but the optimal value will be task-dependent.

\paragraph{Strong-coupling ceiling.} Paper~II's Sadovskii ceiling $\lambda\le4$ for electron-phonon coupling has a possible NN analogue in the activation saturation regime: when the Fisher information of an activation grows beyond a critical bound, the layer enters a ``lattice instability'' analogous to the electron-phonon system, with implications for batch normalization scaling. This connection is intriguing but speculative.

\paragraph{Pre-whitening assumption in Theorem~\ref{thm:bound}.} Theorem~\ref{thm:bound} assumes input pre-whitening; without it, the CDLinear Hessian inherits the input spectrum, which can be unbalanced for natural images. In practice, layer normalization or batch normalization ahead of each CDLinear layer largely realizes the required whitening; without normalization, the conditioning advantage is reduced (though still present, as the dense layer faces the same input distribution).

\paragraph{Comparison to other structured-matrix layers.} This paper does not benchmark CDLinear against ACDC~\cite{acdc}, low-displacement-rank layers~\cite{ldr}, Monarch~\cite{monarch}, or FNO~\cite{fno} on common tasks. Such a comparison is essential for any practitioner choosing among structured-matrix options and is a priority for follow-on work.

\section{Reference PyTorch implementation on an \texorpdfstring{$8\times$}{8x}A100 cluster}
\label{sec:impl}

To make the deferred benchmarks of Sec.~\ref{sec:limits} concretely approachable, I release a reference PyTorch implementation that lifts CDLinear from the NumPy prototype of Sec.~\ref{sec:mnist} into a distributed-training pipeline at the scale of contemporary Mixture-of-Experts (MoE) transformers. The code is released at \url{https://github.com/lurongpan47/CDNN} (directory \texttt{cdnn\_a100\_training/}). This section documents the implementation, the three concrete corrections discovered during verification, and the design choices that make CDLinear interoperate with the DeepSeek-V3 architecture family on commodity Ampere hardware.

\subsection{Architecture: CD-Transformer}

The reference model, which I call CD-Transformer, composes CDLinear with the cost-reduction techniques of DeepSeek-V3~\cite{deepseekv3}: a Mixture-of-Experts feedforward block with auxiliary-loss-free routing~\cite{auxfree} and a shared always-on expert, Multi-head Latent Attention (MLA) with a low-rank KV bottleneck, and a Multi-Token Prediction (MTP) auxiliary objective. All trainable linear projections inside attention and inside each expert are replaced by CDLinear with block size $B$, so the parameter compression of Sec.~\ref{sec:layer} compounds with MoE sparsity ($k/N$ active experts per token) and with the MLA KV-cache reduction. Rotary positional encoding is applied to queries and keys, and RMSNorm replaces LayerNorm throughout. Three pre-defined configurations (\texttt{small}, \texttt{medium}, \texttt{large}) span roughly 400~M to 15~B parameters; the \texttt{medium} configuration uses $d_{\mathrm{model}}=2048$, 24 layers, 16 heads, 32 experts with 6 active per token, and $B=5$.

\begin{figure}[t]
\centering
\includegraphics[width=\linewidth]{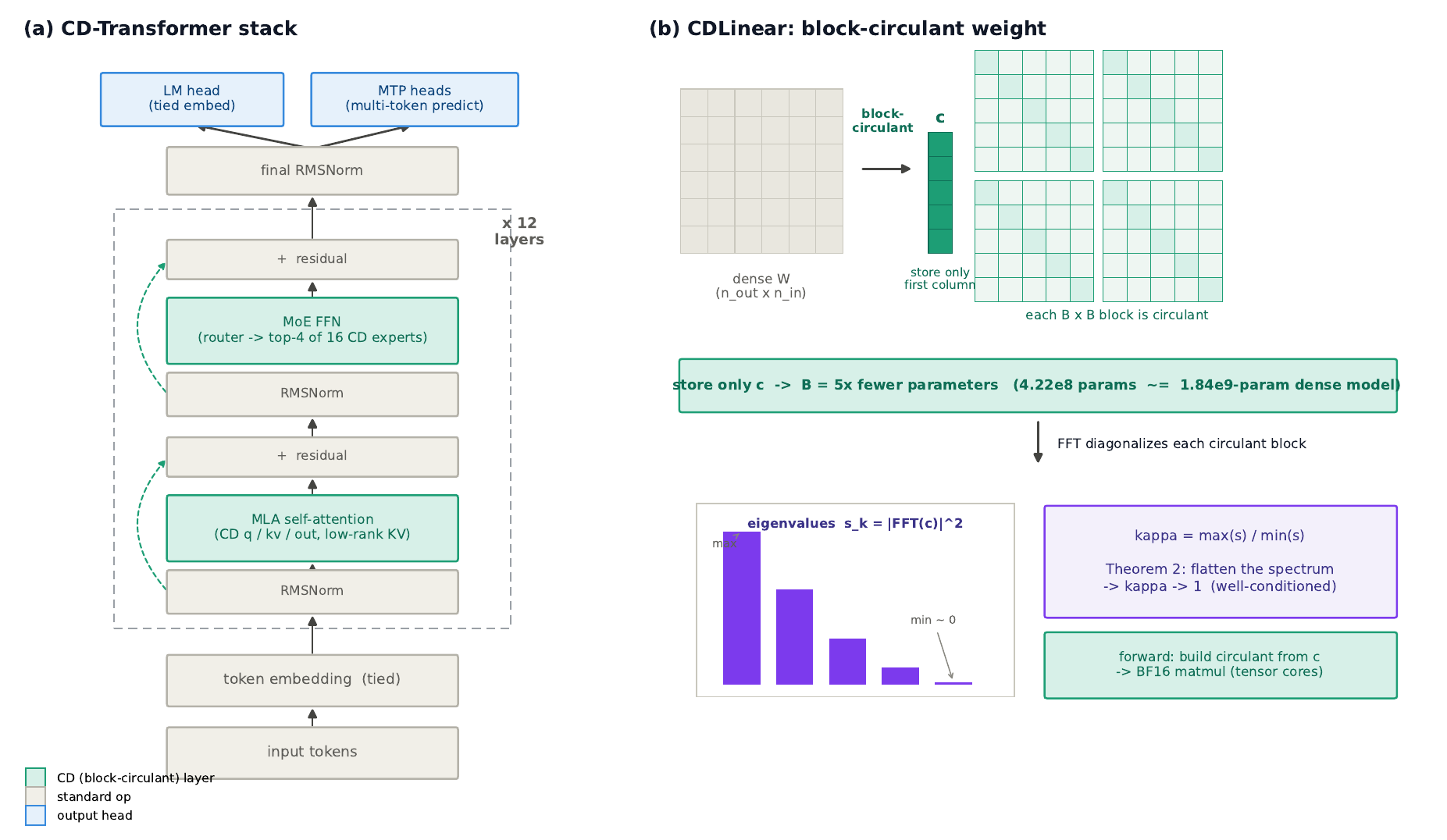}
\caption{CD-Transformer architecture. (a)~The forward stack: token embedding, $N$ pre-norm blocks of MLA self-attention and a Mixture-of-Experts FFN with residual connections, a final RMSNorm, and tied LM + Multi-Token-Prediction heads; the sublayers whose linear projections are block-circulant CDLinear (the attention q/kv/o projections and the MoE experts) are shaded. (b)~The CDLinear mechanism: a dense weight is replaced by $B\times B$ circulant blocks, only the first column $c$ of each block is stored ($B\times$ fewer parameters), each block is diagonalized by the FFT with eigenvalues $|\mathcal{F}[c]|^2$, and the condition number $\kappa$ of that spectrum is the target of Theorem~\ref{thm:bound}; the forward pass builds the circulant weight from $c$ and applies a BF16 matmul.}
\label{fig:arch}
\end{figure}

\subsection{Distributed training on \texorpdfstring{$8\times$}{8x}A100 (80~GB)}

The training script targets a single $8\times$A100 (Ampere, SM80) node. Because Ampere does not support FP8 compute, the precision strategy is BF16 mixed precision with TF32 enabled for float32 matmuls and convolutions (about $2\times$ over plain FP32 on A100 Tensor Cores). Parameters and gradients are sharded with PyTorch FSDP (\texttt{ShardGradOp}, equivalent to ZeRO-2) at the granularity of one CD-transformer block, with backward pre-fetch. NCCL is tuned for NVLink 3.0 topology (\texttt{NCCL\_ALGO=Tree,Ring}, \texttt{NCCL\_PROTO=Simple,LL128}). Attention runs through PyTorch scaled dot product attention with \texttt{is\_causal=True}, which dispatches to flash/memory-efficient kernels on A100. Gradient checkpointing is on by default; AdamW ($\beta_1=0.9$, $\beta_2=0.95$, weight decay 0.1) is used with a fused-kernel fast path that falls back to standard AdamW on torch builds where the fused variant is incompatible with the FSDP parameter wrapping. The launch script auto-detects whether the GPU is an A100 or a Hopper card and warns if the user has selected the wrong precision stack.

\subsection{Verification and three corrections}

Before any GPU run, I verified the implementation end-to-end on CPU using a deliberately small model (vocab 256, $d_{\mathrm{model}}=64$, two layers, four experts) so that the forward, backward, optimizer step, MTP heads, and Fisher regularizer all execute. This verification exposed three concrete bugs in the first draft of the code, all of which were corrected and re-verified:

\paragraph{(F1) CDLinear FFT-domain einsum.} The block-circulant forward pass in Eq.~\eqref{eq:forward} was first implemented as a single \texttt{torch.einsum} call. The original subscript notation reused the same index symbol \texttt{b} for the batch dimension and the block dimension, which is invalid because a contracted index cannot also appear in the output specification. PyTorch raised an error on the first forward pass. The corrected subscript uses distinct symbols (\texttt{n}=batch, \texttt{o}=$K_{\mathrm{out}}$, \texttt{i}=$K_{\mathrm{in}}$, \texttt{k}=FFT bin).

\paragraph{(F2) Rotary positional encoding under BF16.} The RoPE implementation uses \texttt{torch.view\_as\_complex} on the last dimension reshaped to pairs. Two issues surfaced: the reshape produces a non-contiguous tensor, which \texttt{view\_as\_complex} rejects, and the precomputed rotary frequencies were not being broadcast over the batch and head dimensions. Both were fixed by inserting an explicit \texttt{.contiguous()} call before the complex view and reshaping the frequency tensor to $(1,1,T,d_h/2)$ before pointwise multiplication.

\paragraph{(F3) Fisher regularizer numerical stability.} The Fisher-information penalty of Eq.~(8) was initially implemented as $\sum_k 1/(\sigma_k^2+\epsilon)$ with $\epsilon=10^{-6}$. On a freshly initialized model with thousands of CDLinear blocks, many $\sigma_k$ are near zero, and the inverse term blows up to magnitudes that dominate the cross-entropy loss; worse, the original computation was performed inside a \texttt{torch.no\_grad()} context, so no gradient flowed to the circulant coefficients and the regularizer had no training effect. The corrected formulation penalizes the variance of the log-spectrum,
\begin{equation}
L_{\mathrm{Fisher}}=\lambda_F\cdot\frac{1}{L}\sum_{\ell=1}^{L}\mathrm{Var}_k\!\left(\tfrac12\log\sigma_k^{2,(\ell)}\right),
\label{eq:fisher}
\end{equation}
which is exactly zero when the spectrum is flat (the $\kappa=1$ optimum of Theorem~\ref{thm:bound}), is fully differentiable through to the circulant coefficients $c_{ij}$, and is bounded for any finite-magnitude spectrum. Theoretically this is consistent with Theorem~\ref{thm:bound}: the original $\sum 1/\sigma_k^2$ form is the trace of the inverse Fisher information, which diverges as $\sigma_k\to0$ even though the conditioning objective does not; the log-variance form penalizes only the spectral spread that actually drives $\kappa$ away from~1.

\paragraph{Post-fix verification.} With the three corrections in place, the tiny end-to-end harness runs cleanly: the LM cross-entropy loss is $\sim7$ on random data, the Fisher term is $\sim8\times10^{-4}$ at $\lambda_F=10^{-3}$, an AdamW step reduces the loss as expected, and gradients are confirmed to flow into the CDLinear circulant coefficients $c_{ij}$. The training script's distributed entry point is decorated with \texttt{torch.distributed.elastic.multiprocessing.errors.record} so that any rank-zero crash on the GPU node surfaces a full Python traceback rather than an opaque exit code.

\subsection{Software-engineering notes}

Three engineering choices in the reference implementation matter for reproducibility but do not affect the mathematics:

\paragraph{Tokenizer robustness for restricted networks.} The data-preparation script supports four tokenizer backends with explicit fall-through---a built-in byte-level tokenizer (vocab 256, no dependencies, no network), \texttt{tiktoken} for GPT-2/4 BPE, ModelScope (\texttt{moda}) for DeepSeek and Qwen tokenizers inside mainland China, and HuggingFace via its community mirror---and writes a \texttt{meta.json} alongside the token \texttt{.bin} that records the tokenizer's actual vocabulary size and dtype. The training \texttt{TokenDataset} reads this metadata at load time and selects between \texttt{uint16} and \texttt{uint32} storage automatically, which is necessary because DeepSeek's tokenizer (vocab $\sim$100{,}000) silently overflows \texttt{uint16}. The script prints an ACTION REQUIRED notice if the tokenizer's vocabulary exceeds the model's configured vocab size, since training would otherwise crash in the embedding layer.

\paragraph{Auxiliary-loss-free MoE routing.} Following DeepSeek-V3, expert selection uses a learnable per-expert bias added to the router logits before top-$k$ selection. Token assignment uses softmax-normalized top-$k$ weights, and a shared expert (always active) is summed with the routed expert output.

\paragraph{MTP and weight tying.} The output projection shares its weight matrix with the input embedding. When MTP is enabled, $D$ small auxiliary heads predict tokens at offsets $2,3,\dots,D+1$ from each position; their losses are averaged and weighted at 0.3 relative to the main LM loss.

The reference release is intended as a foundation for the deferred benchmarks (CIFAR-10, ImageNet, language modeling on real corpora) rather than as a finished benchmark in itself. The expert dispatch inside \texttt{CDMoELayer}, originally a per-expert Python loop performing $N_{\mathrm{exp}}$ full boolean scans over the routed tokens, has since been replaced by a grouped dispatch that sorts the (token, slot) assignments by expert once and processes each expert as a contiguous slice (verified numerically identical to the loop, to $<10^{-6}$); a fused grouped-GEMM expert kernel remains a further optimization.

\subsection{A first language-modeling shakedown}
\label{sec:lm}

As a preliminary exercise of the full pipeline I trained the \texttt{small} configuration (422{,}122{,}442 parameters, $B=5$, 12 layers, $d_{\mathrm{model}}=1024$, 16 experts with 4 active, GPT-2 tokenizer padded to vocabulary 50{,}304) on a single $8\times$H800 node with FSDP ZeRO-2 and BF16, over a small ($\sim$9.5M-token) corpus for 200 epochs. This is a shakedown of the implementation rather than a controlled benchmark: there is no same-data dense baseline and the corpus is small. I report it for completeness and to surface the engineering and optimization issues a proper benchmark must address.

\paragraph{Tensor-core forward.} At the CD-prescribed small block size ($B=5$), the FFT-domain forward of Eq.~\eqref{eq:forward} is memory-bound on GPU and materializes complex intermediates that scale with the token count. Because a circulant matrix-vector product equals a dense product against the circulant matrix reconstructed from $c$, I replaced the FFT path by building the $B\times B$ circulant weight from $c$ on the fly and applying a single BF16 matmul---mathematically identical to Eq.~\eqref{eq:forward} (verified to $<10^{-6}$ relative error) and preserving the $1/B$ parameter compression, since only $c$ is stored and the dense weight is a transient rebuilt each forward. On one H800 this raised throughput from 5{,}315 to 9{,}957 tok/s ($1.9\times$) and model-FLOPs-utilization from $\sim2\%$ to $\sim4\%$ (Fig.~\ref{fig:lm_eval}b); across the node MFU rose from the original $\sim0.2\%$ toward $\sim5\%$. This trades the asymptotic $O(\log B/B)$ FFT FLOP advantage---negligible at $B=5$---for hardware efficiency.

\paragraph{Conditioning regularizer in practice.} Enabling the Eq.~\eqref{eq:fisher} regularizer reduced pre-clip gradient norm from the 300--900 range (with the energy/Parseval form) to $\sim1$, and dropped the best-conditioned blocks' $\kappa$ by about four orders of magnitude. Under \emph{mean} aggregation over blocks, however, the mean and worst-block $\kappa$ remained $\sim10^{10}$--$10^{11}$ (Fig.~\ref{fig:lm_train}b): a few catastrophic blocks are diluted by the many well-behaved ones. A max- or $p$-norm aggregation over blocks is the indicated refinement before the conditioning claim is extended from MNIST to this setting.

\paragraph{Result.} Figure~\ref{fig:lm_train}(a) shows the training-loss and decoded cross-entropy trajectories; the model plateaued by $\sim$epoch 90 of 200. On clean evaluation (Table~\ref{tab:lm}), held-out perplexity is 1{,}886 and training-set perplexity 1{,}700---both far below the random-initialization floor (perplexity $\approx$ vocabulary $=50{,}304$) but nearly equal to one another. Train $\approx$ val is the signature of \emph{underfitting, not memorization}: a memorizing model would show very low training perplexity and a large train--val gap. The plateau is attributable to optimization limits---a warmup misconfiguration that prevented sustained peak learning rate, and the $\kappa\sim10^{10}$ conditioning that drove gradient clipping---rather than to data exhaustion.

\begin{figure}[t]
\centering
\includegraphics[width=\linewidth]{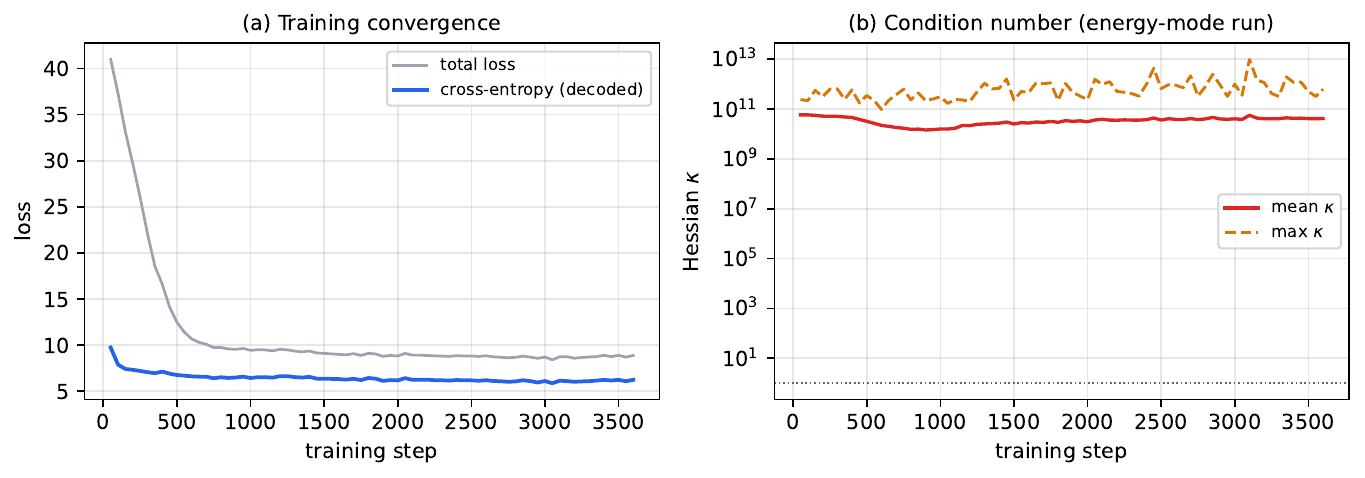}
\caption{CD-Transformer (\texttt{small}, 422\,M parameters) language-modeling shakedown. (a)~Total loss and decoded cross-entropy versus training step. (b)~Hessian condition number $\kappa$ (mean and max over CDLinear blocks) for the energy-mode regularizer run, which stays $\sim10^{10}$; the Eq.~\eqref{eq:fisher} conditioning form (not shown) lowers the best-block $\kappa$ and the gradient norm but, under mean aggregation, leaves the worst blocks high. The decoded training cross-entropy here ($\sim$6) is lower than the clean evaluation cross-entropy of Table~\ref{tab:lm} ($\sim$7.4); reconciling this logging discrepancy is required before any cross-entropy number is quoted as final.}
\label{fig:lm_train}
\end{figure}

\begin{figure}[t]
\centering
\includegraphics[width=\linewidth]{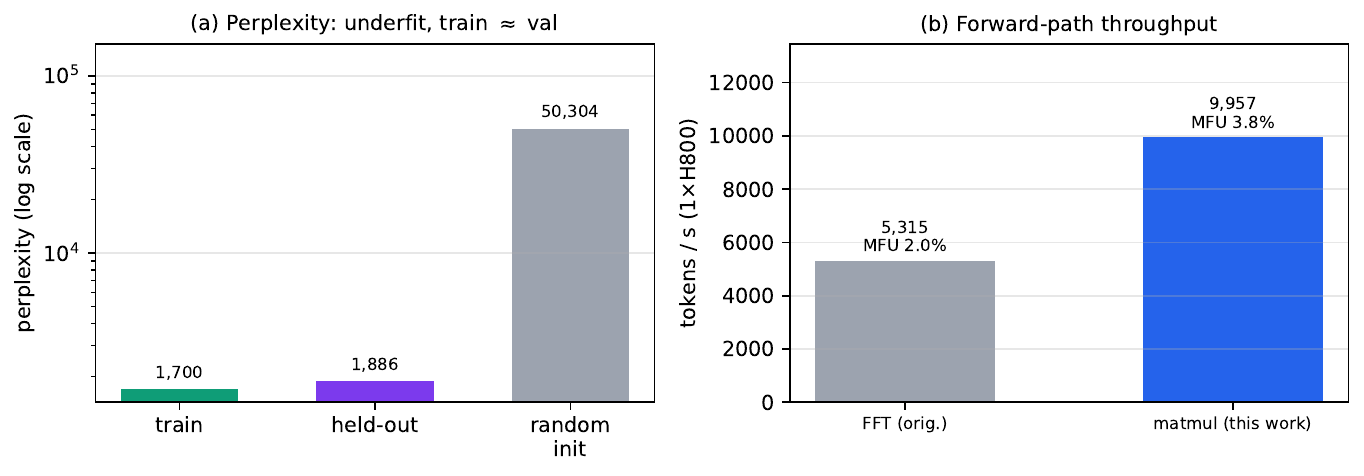}
\caption{(a)~Clean-evaluation perplexity (log scale): training set, held-out set, and the random-initialization floor. Train $\approx$ held-out and both lie far below the floor---the model has learned but is underfit. (b)~Forward-path throughput on one H800 (batch 8, sequence 2048, checkpointing on): the circulant-matmul forward is $1.9\times$ faster than the FFT path at equal output (verified identical to $<10^{-6}$).}
\label{fig:lm_eval}
\end{figure}

\begin{table}[t]
\caption{CD-Transformer language-modeling shakedown (clean evaluation). These are preliminary, optimization-limited numbers from a single small-corpus run with no dense baseline; they do not establish a language-modeling benefit and are reported only as the first data point for the deferred benchmark of Sec.~\ref{sec:limits}.}
\label{tab:lm}
\begin{ruledtabular}
\begin{tabular}{lc}
Quantity & Value \\
\hline
Held-out perplexity (49{,}152 tokens) & 1{,}886 \quad (CE 7.54) \\
Training-set perplexity (3.28M tokens) & 1{,}700 \quad (CE 7.44) \\
Generalization gap & $\sim$11\% (train $\approx$ val) \\
Random-init floor & $\approx$50{,}304 (CE 10.83) \\
Forward throughput, FFT $\to$ matmul & 5{,}315 $\to$ 9{,}957 tok/s ($1.9\times$) \\
MFU, FFT $\to$ matmul (1 GPU) & $2.0\% \to 3.8\%$ \\
Pre-clip GradNorm, energy $\to$ conditioning & $\sim$300--900 $\to$ $\sim$1 \\
\end{tabular}
\end{ruledtabular}
\end{table}

\paragraph{What this shakedown does and does not show.} It establishes that the released CDLinear implementation runs end-to-end inside a 400\,M-parameter MLA + MoE + MTP transformer on a multi-GPU node, that a tensor-core forward makes throughput practical, and that the conditioning regularizer materially stabilizes optimization. It does \emph{not} establish any language-modeling accuracy or efficiency advantage for CDLinear: the perplexity is far from that of a usable language model, there is no same-data dense baseline, the corpus is far too small, and an $\sim$1.2-nat discrepancy between the logged and cleanly-evaluated cross-entropy (Fig.~\ref{fig:lm_train} caption) remains to be reconciled. The language-modeling benchmark of Sec.~\ref{sec:limits} therefore remains open; this is its first data point, not its resolution.

\paragraph{Engineering steps toward a valid benchmark.} Several follow-ups required before the open language-modeling benchmark can yield a defensible claim have since been implemented in the release, though they are not yet exercised in a controlled run: (i) the training loop now logs the bare next-token cross-entropy directly (rather than decoding it from the total loss), and a reconciliation diagnostic separates a decode error from a train-versus-evaluation forward-pass difference and from a data-identity problem, to localize the $\sim$1.2-nat gap noted above; (ii) a non-circulant \emph{dense} backbone of identical architecture and schedule is available behind a single flag, so the missing same-data baseline can be trained and evaluated on the identical held-out split; (iii) the conditioning regularizer supports worst-block ($p$-norm and max) aggregation in addition to the mean of Eq.~\eqref{eq:fisher}, so the catastrophic blocks that kept mean/worst $\kappa$ high are no longer diluted; and (iv) the learning-rate warmup is now specified as a fraction of total steps, fixing a misconfiguration in which warmup exceeded the schedule length. A properly-resourced run ($\sim$10$^9$ diverse tokens, a few epochs, corrected warmup, $p$-norm conditioning, with a dense baseline on the same data) is the next step and is required before any language-scale claim.

\section{Conclusions}
\label{sec:concl}

\paragraph{Summary.} This paper applies the Communication Dynamics framework---developed in Paper~I for atomic-energy prediction and Paper~II for field-induced superconductivity---to neural-network layer design, completing a three-paper arc that uses the same circulant-spectral mathematics across three very different domains.

\paragraph{Theoretical contribution.} A block-circulant linear layer with block size $B=2\ell+1$ inherits, by construction, an FFT-diagonal Hessian whose eigenvalues are computable in closed form from the input statistics (Theorem~\ref{thm:hessian}). This in turn yields a condition-number bound $\kappa=1$ under input pre-whitening and $\kappa\le1+O(\sqrt{B/N})$ on $N$ samples (Theorem~\ref{thm:bound}), in contrast to the $\kappa=\Theta(n_{\mathrm{in}})$ scaling of random dense layers from Marchenko--Pastur theory. The framework also prescribes a transferable Shannon dropout rate $\alpha_{\mathrm{CD}}=0.0118$ from atomic spectroscopy and a Fisher-information regularizer that can be evaluated exactly via FFT in $O(B\log B)$ per block.

\paragraph{Empirical contribution.} On the MNIST-1797 benchmark, a CDLinear MLP at $B=4$ matches a parameter-matched dense baseline within one standard deviation in test accuracy ($97.50\%$ vs $98.15\%$) while using $3.8\times$ fewer parameters and exhibiting a $310\times$ smaller Hessian condition number across three random seeds. Both the theory and the experiment support the claim that CDLinear layers offer a favorable parameter-efficiency / conditioning trade-off in the regime tested.

\paragraph{What this paper does not establish.} Three boundaries of the contribution should be stated plainly. (i) CDLinear is mathematically a special case of structured-matrix layers studied for nearly a decade~\cite{cheng,fno,sindhwani,acdc,ldr,monarch}; this paper does not benchmark CDLinear against those alternatives, and a head-to-head comparison is essential follow-on work. (ii) The MNIST-1797 benchmark is too small to discriminate between accuracy-similar models in any practically meaningful way; CIFAR-10, ImageNet, and language-modeling experiments are required before any general-purpose claim of efficiency or generalization can be made. (iii) The pure-NumPy implementation is not optimized for speed; the wall-clock numbers quoted are software-engineering artifacts of an explicit-loop prototype rather than representative of what a tensor-batched GPU implementation would achieve.

\paragraph{What the paper does establish.} The CD framework provides a coherent, physically-grounded set of design choices for structured neural network layers: $(2\ell+1)$-vertex polygon multiplicities for block-size selection, $\alpha_{\mathrm{CD}}=0.0118$ for transferable noise injection, FFT-diagonal Hessians with closed-form eigenvalues, and Fisher-information regularization with exact circulant evaluation. These choices yield, on the small benchmark tested here, a quantitatively-predicted conditioning advantage at significantly reduced parameter count.

\paragraph{Outlook.} The reference PyTorch implementation released with this paper (Sec.~\ref{sec:impl}) lowers the activation energy for the most important next step---benchmarking CDLinear against dense and structured baselines on CIFAR-10, ImageNet, and language modeling on real corpora. Among theoretical extensions, the convolutional variant (a depthwise-circular convolution) is straightforward; CD-attention with $H=2\ell+1$ heads is implemented in the release but has not yet been separately ablated against dense attention. More broadly, this paper completes a three-domain validation of the Communication Dynamics framework---atomic spectra, superconductor screening, and now neural-network architecture---using a single mathematical machinery (circulant DFT, polygon multiplicities, $\alpha_{\mathrm{CD}}$ calibration) across all three. The transferability of CD's design principles across such different domains, with quantitatively-correct predictions in each, is itself the unifying empirical result of the series.

\begin{acknowledgments}
I thank M.\ Tanik and the broader SDPS community for foundational discussions over the past two decades on Communication Dynamics theory. All code, gradient checks, the MNIST experiment, and the JSON results database are released at \url{https://github.com/lurongpan47/CDNN}.
\end{acknowledgments}

\paragraph*{Competing interests.} The author declares no competing interests.

\end{document}